\newtheorem{proposition}{Proposition}
\DeclareMathOperator*{\argmax}{arg\,max}
\DeclareMathOperator*{\argmin}{arg\,min}
\begin{document}
\begin{center}
\huge
Climbing the Kaggle Leaderboard\\by Exploiting the Log-Loss Oracle\\
\normalsize
\vspace{.5cm}
Jacob Whitehill (\begin{tt}jrwhitehill@wpi.edu\end{tt})\\
Worcester Polytechnic Institute
\end{center}

\abstract{
In the context of data-mining competitions (e.g., Kaggle, KDDCup, ILSVRC 
Challenge \cite{ImageNet2015}), we show how access to an oracle that reports a contestant's
log-loss score on the test set can be exploited to deduce the ground-truth of some
of the test examples. By applying this technique
iteratively to batches of $m$ examples (for small $m$),
all of the test labels can eventually be inferred.  In this paper, 
(1) We demonstrate this attack on the first stage of 
a recent Kaggle competition (Intel \& MobileODT
Cancer Screening) and use it to achieve a log-loss of $0.00000$ (and
thus attain a rank of \#4 out of  848 contestants), without ever
training a classifier to solve the actual task. (2) We prove an upper bound
on the batch size $m$ as a function of the floating-point resolution of
the probability estimates that the contestant submits for the labels.
(3) We derive, and demonstrate in simulation, a more flexible attack that can be used even when
the oracle reports the accuracy on an unknown (but fixed) subset of the test set's labels.
These results underline the importance of evaluating contestants based only on test data that
the oracle does not examine.
}
\section{Introduction}
Data-mining competitions such as those as offered by Kaggle, KDD Cup, and other organizations,
have become a mainstay of machine learning.
By establishing common rules of participation as well as training and testing datasets that are
shared by all contestants, these competitions can help to advance the state-of-the-art of machine
learning practice in a variety of application domains.
In order for the scientific results of these contests to have value, however, it is imperative
that the methods by which candidates are evaluated be sound. The importance of fair evaluation is
made more pressing by the availability of oracles, often provided by the organizers of the competitions
themselves, that return the accuracy or loss value of the contestant's guesses with respect to the
test labels. The purpose of such oracles is to help participants to pursue more promising algorithmic
strategies and to improve the overall quality of contestants' submissions. But they
also open up the possibility of systematic overfitting, either inadvertently or maliciously.

In this paper, we consider how an oracle that returns the \emph{log loss} of a contestant's guesses
w.r.t.~the ground-truth labels of a test set, can be exploited by an attacker to infer the test set's
true labels. The log-loss is mathematically convenient because, unlike other metrics such as the 
AUC \cite{Tyler2000,AgarwalEtAl2005}, which is calculated over \emph{pairs} of examples, the log-loss 
can be computed for each example separately. Moreover, unlike the 0-1 loss that conveys only the number
of correctly labeled examples, the log-loss measures how ``close'' the contestant's guesses are to ground-truth.
The attack proposed in our paper can be effective despite limited floating-point resolution in the oracle's
return values, and can be applied even if the oracle only computes the log-loss on an unknown (but fixed)
subset of the test set. In a case study we performed for this paper,
we applied the attack to achieve a perfect score on a recent
Kaggle competition (Intel \& MobileODT Cervical Cancer Screening), thereby attaining a rank on the
first-stage competition leaderboard of \#4 out of 848. To be fair, Kaggle had structured their competition rules
such that the first stage was mostly for informational purposes to let contestants know how their
algorithmic approachs were faring compared to their contestants'. However, even a temporary high ranking
in a data-mining contest could conceivably hold ancillary value, e.g., by inducing
a potential employer or recruiter to take a look at a particular person's curriculum vitae.
In any case, the potential of exploiting a competition oracle underlines the importance
of employing commonsense safeguards to preserve the integrity of contestant rankings. In particular,
the results in this paper suggest that evaluation of contestants' performance should be done strictly on test examples on which
the oracle never reported accuracy.

\subsection{Related work}
Both intentional hacking \cite{Whitehill2016,blum2015ladder,zheng2015toward} and
inadvertent overfitting \cite{dwork2015preserving,hardt2014preventing} to test data in adaptive data analyses -- including
but not limited to data-mining competitions -- has generated recent research interest in the privacy-preserving
machine learning and computational complexity theory communities. 
Blum and Hardt \cite{blum2015ladder} recently described a ``boosting'' attack with which a contestant can estimate
the test labels such that, with probability $2/3$,  their accuracy w.r.t.~the ground truth
is better than chance. They also proposed a ``Ladder'' mechanism that can be used to rank contestants' performance and that
is robust to such attacks.
In addition, in our own prior work \cite{Whitehill2016}, we showed how an oracle that reports the AUC can be used
to infer the ground-truth of a few of the test labels with complete certainty.

\section{Notation and Assumptions}
We assume that the test set contains $n$ examples, each of which belongs to one of $c$ possible classes.
We represent the ground-truth of the entire test set using a row-wise 1-hot 
matrix ${\bf Y}_n \in \{0,1\}^{n \times c}$,
where ${\bf Y}_n \doteq [{\bf y}_1,\ldots,{\bf y}_n]^\top$, each ${\bf y}_i = [ y_{i1}, \ldots, y_{ic} ]^\top$, each
$y_{ij} \in \{0,1\}$, and $\sum_j y_{ij}=1$ for each $i$.
Similarly, we represent the contestant's guesses using matrix $\widehat{\bf Y}_n\in\mathbb{R}^{n\times c}$, where
$\widehat{\bf Y}_n \doteq [\widehat{\bf y}_1,\ldots,\widehat{\bf y}_n]^\top$, each $\widehat{\bf y}_i = [ \widehat{y}_{i1}, \ldots, \widehat{y}_{ic} ]$,
each $\widehat{y}_{ij} \in (0,1)$, and $\sum_j \widehat{y}_{ij}=1$ for each $i$.
The loss function that
we study in this paper is the \emph{log-loss}, computed by function $f$, of the contestant's guesses with respect to the ground-truth:
\begin{equation}
\label{eqn:log_loss}
\ell_n = f({\bf Y}_n , \widehat{\bf Y}_n ) = - \frac{1}{n} \sum_{i=1}^n \sum_{j=1}^c y_{ij} \log \widehat{y}_{ij}
\end{equation}
We assume that the data-mining competition offers an oracle to which a contestant can submit her/his real-valued
guesses for the test labels and obtain the log-loss $\ell_n$ of
the guesses with respect to the ground-truth.

\section{Example}
To show how knowledge of the log-loss can reveal information about the ground-truth itself,
consider a tiny test set in which 
there are just $2$ examples and $3$ classes, and suppose a contestant submits the following guesses to the oracle
(where $e$ is the base of the natural logarithm):
\[\widehat{\bf Y}_2 = \left[ \begin{array}{ccc}
  e^{-2}\ & e^{-1}\ & 1-e^{-2}-e^{-1} \\
  e^{-8}\ & e^{-4}\ & 1-e^{-8}-e^{-4} \\
\end{array} \right]
\]
If the oracle reports that the log-loss 
of the contestant's guesses, with respect to the ground-truth, is $3$,
then the ground-truth labeling of these two examples \emph{must} be 
\[{\bf Y}_2 = \left[\begin{array}{ccc}
  1 & 0 & 0 \\
  0 & 1 & 0 \\
\end{array} \right]
\]
since this is the only value of ${\bf Y}_2$  that satisfies
\[
f({\bf Y}_2, \widehat{\bf Y}_2) = - \frac{1}{2} \sum_{i=1}^2 \sum_{j=1}^3 y_{ij} \log \widehat{y}_{ij} = 3
\]

\section{Problem formulation and proposed solution}
Here we describe an attack (summarized
in Algorithm \ref{alg:cheat})
with which a contestant can iteratively (in batches) infer the ground-truth of the test
set while simultaneously improving (usually) her/his standing on the competition leaderboard.
For simplicity of notation (and without loss of generality), we assume that the examples in the test set
are ordered such that
(1) the labels of the first $k$ examples have already been inferred; (2) the labels of the next $m$ examples (which we call the \emph{probed} examples) are to be inferred in the current round; and (3) the rest ($n-m-k$) of the test labels will 
uninferred during the current round.

Since $f$ is summed over all $n$ examples, it can be re-written as
\begin{equation}
\label{eqn:loss}
\ell_n \doteq f({\bf Y}_n , \widehat{\bf Y}_n ) = - \frac{1}{n}\left[
                                             \sum_{i=1}^{k} \sum_j y_{ij} \log \widehat{y}_{ij} +
                                             \sum_{i=k+1}^{k+m} \sum_j y_{ij} \log \widehat{y}_{ij} +
                                             \sum_{i=k+m+1}^{n} \sum_j y_{ij} \log \widehat{y}_{ij} \right]
\end{equation}
Assuming the first $k$ examples have already been inferred correctly so that $\widehat{y}_{ij}=y_{ij}$ for each $j$ and each
$i\leq k$, then the first term in the RHS of Equation \ref{eqn:loss}
is approximately $0$.\footnote{In practice, competition oracles often enforce that each $\widehat{y}_{ij} \in
[\gamma, 1 - \gamma]$ where is a small number such as $\gamma=1\times 10^{-15}$;
this results in a negligible cost of $k \log (1 - (c-1) \gamma)$ for the second term. See Section
\ref{sec:practical_limit}.}
Moreover, if we set $\widehat{y}_{ij}$ to $1/c$ for each $j$ and each $i=k+m+1,\ldots,n$, then the
third term in the RHS equals $-1 \times (n-m-k) \times \log c$. The log-loss $\ell_m$ due to just the $m$
probed examples is thus:
\begin{eqnarray}
\ell_m &\doteq& \frac{1}{m} \sum_{i=k+1}^{k+m} \sum_j y_{ij} \log \widehat{y}_{ij} \nonumber \\ 
       &=& \frac{1}{m}\left((n-m-k)\log c - n \times \ell_n \right) \label{eqn:partial_loss}
\end{eqnarray}

Given knowledge of $\ell_m$ -- which can be calculated from the value $\ell_n$ returned by the oracle -- and
given knowledge of $[ \widehat{\bf y}_{k+1}, \ldots, \widehat{\bf y}_{k+m} ]^\top$ -- which
the candidate her/himself controls -- the ground-truth of the $m$ examples can be inferred using exhaustive
search over all possible $c^m$ labelings (for small $m$).
Over $\lceil n/m \rceil$ consecutive rounds, we can determine the ground-truth values of \emph{all} of the
test labels.% (in batches of $m$).
\begin{algorithm}
\caption{Infer the ground-truth of the test set using the oracle's response $\ell_n$.}
\label{alg:cheat}
\begin{algorithmic}
\REQUIRE A probe matrix ${\bf G}_m$, where $m$ is the number of probed examples in each round. 
\REQUIRE An oracle that reports $f({\bf Y}_n, \widehat{\bf Y}_n)$.
\ENSURE The ground-truth labels ${\bf Y}_n$ for all $n$ examples.
\FOR{round $r=1,\ldots,\lceil n/m \rceil$}
	\STATE 1. Set $k\leftarrow (r-1)\times m$.
	\STATE 2. Configure the probe matrix $\widehat{\bf Y}_n$:
	\begin{itemize}
	\item For $i=1,\ldots,k$ (examples that have already been inferred), set $\widehat{\bf y}_i$ to the inferred ${\bf y}_i$.
	\item For $i=k+1,\ldots,k+m$ (the probed examples), set $\widehat{\bf y}_i$ to the corresponding row of ${\bf G}_m$.
	\item For $i=k+m+1,\ldots,n$ (examples that will remain uninferred), set $\widehat{y}_{ij}$ to $1/c$ for all $j$.
	\end{itemize}
	\STATE 3. Submit $\widehat{\bf Y}_n$ to oracle and obtain $\ell_n$.
	\STATE 4. Compute $\ell_m$ (the loss on just the $m$ probed examples) according to Equation \ref{eqn:partial_loss}.
	\STATE 5. Determine the ground-truth of the probed examples by finding $[ {\bf y}_{k+1}, \ldots, {\bf y}_{k+m} ]^\top$ that
	minimizes $\epsilon$.
\ENDFOR
\end{algorithmic}
\end{algorithm}

Note that, due to the finite floating-point resolution of the oracle's return value,
there is usually a small difference between the $\ell_m$ that is calculated based on the oracle's response
and the true log-loss value of the $m$ probed examples. We call this difference the \emph{estimation error}:
\[
%\epsilon_m \doteq |\ell_m - f([ {\bf y}_{(r-1)m+1}, \ldots, {\bf y}_{rm} ]^\top, [ \widehat{\bf y}_{(r-1)m+1}, \ldots, \widehat{\bf y}_{rm} ]^\top)|
\epsilon \doteq |\ell_m - f([ \widehat{\bf y}_{k+1}, \ldots, \widehat{\bf y}_{k+m} ]^\top, [ {\bf y}_{k+1}, \ldots, {\bf y}_{k+m} ]^\top)|
\]

\subsection{How to choose the guesses of the probed examples}
The key to exploiting the oracle so as to infer the ground-truth correctly
is to choose the guesses $\widehat{\bf Y}_n$ so that $\ell_m$ reveals the labels of the probed examples uniquely.
To simplify notation slightly, we let
${\bf G}_m \doteq [ \widehat{\bf y}_{k+1}, \ldots, \widehat{\bf y}_{k+m} ]^\top$ -- which we call the \emph{probe matrix}
-- represent the contestant's guesses for the $m$ probed examples. The probe matrix can stay the same across all submission rounds.

If the contestant's guesses were real numbers in the mathematical sense -- i.e., with
infinite decimal resolution -- then ${\bf G}_m$ could be set to random values, constrained
so that each row sums to 1. With probability
$1$, the log-loss $\ell_m$ would then be unique over all possible instantiations of $[ {\bf y}_{k+1},\ldots,{\bf y}_{k+m} ]^\top$. However,
in practice, both the guesses and the log-loss reported by the oracle have finite precision, and ``collisions'' -- different
values of the ground-truth that give rise to the same, or very similar, losses -- could occur.
Consider, for example, the probe matrix below:
\[
\left[ \begin{array}{ccc}
$0.53595382$ & $0.20743777$ & $0.25660840$ \\
$0.76336402$ & $0.17982958$ & $0.05680643$ \\
$0.83539897$ & $0.02825473$ & $0.13634628$ \\
$0.88845736$ & $0.10858667$ & $0.00295598$ \\
                   \end{array}\right]
\]
Here, no two elements are closer than $0.025$ apart. Nonetheless, two possible values
for the ground-truth labeling ${\bf Y}_4$ result in log-loss values (approximately $1.18479$ and $1.18488$, respectively)
that are less than $10^{-4}$ apart; these candidate labelings are
\begin{eqnarray*}
\left[ \begin{array}{ccc}
$0$ & $1$ & $0$ \\
$0$ & $0$ & $1$ \\
$1$ & $0$ & $0$ \\
$1$ & $0$ & $0$ \\
\end{array}\right]
\quad
\textrm{and}
\quad
\left[ \begin{array}{ccc}
$1$ & $0$ & $0$ \\
$0$ & $1$ & $0$ \\
$1$ & $0$ & $0$ \\
$0$ & $1$ & $0$ \\
\end{array}\right]
\end{eqnarray*}
If the oracle returned a log-loss of, say, $1.185$, then it would be ambiguous which of the two values of
${\bf Y}_4$ was the correct one.

\subsection{Collision avoidance}
\label{sec:collisions}
In order to avoid collisions, we need to choose ${\bf G}_m$ so that the minimum distance -- over all
possible pairs of different ground-truth labelings of the $m$ examples --
is large enough so that even a floating-point approximation of $\ell_m$
can uniquely identify the ground-truth. We can thus formulate a constrained optimization problem
in which we express the \emph{quality} $Q$ of ${\bf G}_m$ as:
\begin{equation}
\label{eqn:quality}
Q({\bf G}_m) \doteq \min_{{\bf Y}_m \ne {\bf Y}_m'} |f({\bf Y}_m,{\bf G}_m) - f({\bf Y}_m',{\bf G}_m)|
\end{equation}
where ${\bf Y}_m,{\bf Y}_m'$ are \emph{distinct} ground-truth labelings, and we wish to find
\[
{\bf G}_m^* \doteq \argmax_{{\bf G}_m} Q({\bf G}_m)
\]
subject to the constraints that each row of ${\bf G}_m$ be a probability distribution.
Optimization algorithms do exist to solve constrained minimax and maximin problems
\cite{madsen1978linearly,brayton1979new}; however,
in practice, we encountered numerical difficulties when using them (specifically, the MATLAB implementation in
\begin{tt}fminimax\end{tt}), whereby the constraints at the end of optimization were not satisfied.
Instead, we resorted to a heuristic that  is designed
to sample the guesses $\widehat{y}_{ij}$ so that the sum of the logarithms of a randomly chosen subset of the guesses --
one for each example $i$ -- is far apart within the range of 32-bit floating-point numbers. In particular, we set
$\widehat{y}_{ij}=a\times 10^{b}$ and sampled $a \sim \mathcal{U}([0,1])$
and $b \sim \mathcal{U}(\{-14,13,\ldots,-1,0\})$ for each $i$ and each $j<c$. For $j=c$, we sampled
$a \sim \mathcal{U}([0,1])$ and $b$ was fixed to $0$. Finally, we normalized each $\widehat{\bf y}_i$
so that the entries sum to 1.
Based on this heuristic, we used Monte-Carlo sampling (with approximately $10000$ samples) to
optimize the maximin expression above. In particular, for $m=6$, we obtained a matrix
${\bf G}_6$ for which 
$Q({\bf G}_6)= 0.00152$. This number is substantially greater than the largest estimation error we ever encountered
during our attacks (see Section \ref{sec:experiment_kaggle})
and thus enabled us to conduct our attack in batches of $6$ probed excamples.
However, for $m=7$, we were never able to find a ${\bf G}_7$ for which the
RHS of Equation \ref{eqn:quality} above exceeded $0.0001$.

\section{Practical limit on number of probed examples $m$}
\label{sec:practical_limit}
The oracles in data-mining competitions such as Kaggle often impose
a limit on the submitted probabilities so that
$\widehat{y}_{ij}\in[\gamma, 1-\gamma]$ for each $i,j$, where $\gamma$ is a small number such as 
$10^{-15}$. This ensures that the log-loss is well defined (so that $\log 0$ is never evaluated) but also
indirectly imposes a limit on how many examples $m$ can be probed during each round. In particular, we can
prove an upper bound on the quality of a probe matrix ${\bf Q}_m$ as a function of $\gamma$:

\begin{proposition}
Let $m$ be the number of probed examples and let $c$ be the number of possible classes. 
Let $\gamma\in (0,1)$ represent the minimum value, imposed by the oracle, of any guess $\widehat{y}_{ij}$.
Then the quality $Q({\bf G}_m)$ of any probe matrix ${\bf G}_m$ is bounded above
by $\frac{\log (1-(c-1)\gamma) - \log \gamma}{c^m - 1}$.
\end{proposition}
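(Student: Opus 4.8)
The plan is to combine a simple pigeonhole (averaging) argument over the $c^m$ possible labelings with a direct computation of the total range of attainable log-loss values. First I would note that, because each row of ${\bf Y}_m$ is $1$-hot, a ground-truth labeling is nothing more than a choice of one column per row; writing $\sigma(i)$ for the index of the $1$ in row $i$, there are exactly $c^m$ distinct labelings, and to each corresponds a single scalar $f({\bf Y}_m,{\bf G}_m)=-\frac{1}{m}\sum_{i=1}^m \log \widehat{y}_{i,\sigma(i)}$. If any two of these $c^m$ scalars coincide then $Q({\bf G}_m)=0$ and the bound holds trivially (its right-hand side is positive), so I may assume all $c^m$ values are distinct.

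The key step is then an averaging bound. Sorting the distinct values as $v_1<v_2<\cdots<v_{c^m}$, the quantity $Q({\bf G}_m)$, being the minimum $|f({\bf Y}_m,{\bf G}_m)-f({\bf Y}_m',{\bf G}_m)|$ over all pairs, equals the minimum gap between \emph{consecutive} sorted values. Since there are $c^m-1$ such gaps, each at least $Q({\bf G}_m)$, and they telescope to $v_{c^m}-v_1$, I obtain $(c^m-1)\,Q({\bf G}_m)\le v_{c^m}-v_1$, i.e.
\[ Q({\bf G}_m)\le \frac{v_{c^m}-v_1}{c^m-1}. \]
It remains only to show that the total range $v_{c^m}-v_1$ is at most $\log(1-(c-1)\gamma)-\log\gamma$.

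To bound the range I would exploit that $f({\bf Y}_m,{\bf G}_m)$ is separable across rows. The maximum over labelings is attained by selecting, in each row, the \emph{smallest} entry, and the minimum by selecting the \emph{largest}, so
\[ v_{c^m}-v_1=\frac{1}{m}\sum_{i=1}^m\Big[\log\big(\max_j \widehat{y}_{ij}\big)-\log\big(\min_j \widehat{y}_{ij}\big)\Big]. \]
For each row the simplex constraint forces $\min_j \widehat{y}_{ij}\ge\gamma$ and $\max_j \widehat{y}_{ij}\le 1-(c-1)\gamma$, the latter because the remaining $c-1$ entries are each at least $\gamma$. Hence every bracketed term is at most $\log(1-(c-1)\gamma)-\log\gamma$, and averaging over the $m$ rows preserves the same bound. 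Substituting into the pigeonhole inequality yields the claim.

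The argument is largely mechanical once this structure is seen; the one point requiring care is the upper bound on the largest entry in each row. It is tempting to use the oracle's stated box constraint $\widehat{y}_{ij}\le 1-\gamma$, but that would give a strictly weaker numerator. The tight value $1-(c-1)\gamma$ follows instead from the stronger simplex constraint (each row sums to $1$ with all entries $\ge\gamma$), and getting exactly the stated constant depends on invoking it. The only other subtlety is disposing of the collision case at the outset, so that the distinctness needed in the pigeonhole step is legitimately assumed.
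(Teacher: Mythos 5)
Your proof is correct and follows essentially the same route as the paper's: confine all $c^m$ attainable log-loss values to an interval of width at most $\log(1-(c-1)\gamma)-\log\gamma$ and apply a pigeonhole argument to the $c^m-1$ gaps between consecutive values. Your write-up is in fact somewhat more careful than the paper's (explicitly disposing of the collision case and making the telescoping-gap and per-row range bounds precise), but the underlying argument is identical.
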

\begin{proof}
Since $\gamma$ is the minimum value of any element in ${\bf G}_m$, then $1-(c-1)\gamma$ is the maximum value.
Each of the $m$ probed examples must therefore contribute at least $- \log (1-(c-1)\gamma)$
and at most $- \log \gamma$
to the log-loss. Averaged over all $m$ examples, the log-loss must therefore be in the closed interval
\[ I \doteq \left[ - \frac{1}{m} \sum_i \log (1 - (c-1)\gamma),\quad - \frac{1}{m} \sum_i \log \gamma \right] =
            \left[ - \log (1 - (c-1)\gamma), - \log \gamma \right] \]
Since there are $c$ classes, then there are $c^m$ possible ground-truth labelings and corresponding
log-losses. The maximium value of  $Q({\bf G}_m)$ -- i.e., the minimum
distance, over all possible ground-truth labelings, between corresponding log-loss values -- 
is attained when the log-losses are distributed across $I$ so that the $c^m-1$ ``gaps'' between
consecutive pairs of log-loss values  are equal in size. Therefore, the maximum value of $Q({\bf G}_m)$ is at most
\[
\delta = \frac{\log (1-(c-1)\gamma) - \log \gamma}{c^m - 1}
\]
\end{proof}
Since $\delta$ decreases exponentially in $m$,  and since $\delta$ must be kept larger than
the maximum estimation error $\epsilon$ observed when executing Algorithm \ref{alg:cheat}, then 
$m$ must necessarily be kept small.  As an example, for $m=10$, $\delta \approx 5.8 \times 10^{-4}$, and in practice we were not able
to find satisfactory ${\bf G}_m$ even  for $m\geq 7$. Nonetheless, even with $m=6$, we were able to climb the leaderboard 
of a recent Kaggle competition successfully.

\section{Experiment: Kaggle Competition}
\label{sec:experiment_kaggle}
We tested Algorithm \ref{alg:cheat} on the Intel \& MobileODT Cervical Cancer Screening competition hosted by Kaggle in May-June 2017.
The objective of the competition was to develop an automatic classifier to analyze cervical scans of women who are at-risk
for cervical cancer and to predict the most effective treatment based on the scan. Such a classifier could potentially save many lives, especially
in rural parts of the world in which high-quality medical care is lacking. During the first stage of the contest,
the competition website provided each contestant with
training images ($1821$) and associated training labels (with $c=3$ categories),
as well as testing images ($n=512$). The goal of the competition was to predict
the test labels with high accuracy. To help competitors identify the most promising classification methods, Kaggle provided
an oracle -- which each contestant could query up to 5 times per day -- that reported the log-loss on \emph{all} $512$ test
examples \emph{without} any added noise. After the first-stage submission deadline (June 14, 2017), the second  stage of the competition 
began, using a larger test set and an oracle that reported the loss  on only a fixed subset of the test samples.

{\bf Cheating during the first stage}:  Since the oracle during the first stage of the  competition  returned the log-loss on the
\emph{entire} test set, it provided an ideal environment in which to demonstrate Algorithm \ref{alg:cheat}. We performed the attack
in phases according to the following procedure:
For the first 2 queries, we probed only a single test label (i.e., $m=1$) just to verify that our code was working correctly.
For the next 30 queries, we probed $m=4$
labels (using the ${\bf G}_4$ shown in the appendix).
For the remaining queries (after we had found ${\bf G}_6$ with large enough $Q$), we probed $m=6$ examples per query.
The maximum estimation error $\epsilon$, over all rounds, between the log-loss returned by the oracle and the loss calculated
based on the inferred
ground-truth, was less than $0.0061$. This was less than half 
of $Q({\bf G}_6)$ for the probe matrix we used and thus allowed us to infer the ground-truth unambiguously.
During the competition we did not perform any supervised learning of cervical scan images 
(i.e., the intended purpose of the competition) whatsoever.

{\bf Results}: The progression of our attack is shown in Figure \ref{fig:climbing}. In short, with less than 100
oracle queries (well within the limit given the total duration of the competition),
we were able to infer the ground-truth labels of all the test examples perfectly.
We note that, during consecutive iterations of the attack, the attained log-loss need not always decrease -- this is because
the the reduction in loss due to inferring more examples can sometimes be dwarfed by an increase in loss due to which specific
entries of ${\bf G}_m$ are selected by the corresponding ground-truth of the probed examples.
Nevertheless, the proposed attack  is able to recover perfectly all $m$ probed labels during every round. The progression of (usually
decreasing) log-loss values $\ell_n$ are plotted in Figure \ref{fig:climbing} (left). By the last iteration, we had
recovered the ground-truth
values of all $512$ examples correctly and thus attained a loss of $0.00000$. Since we were tied with several other contestants who 
also achieved the same loss -- whether by legitimate means or by cheating -- we were ranked in $4$th place on the first-stage leaderboard
(see Figure \ref{fig:climbing} (right)).

{\bf Second stage}: During the second stage of the same Kaggle competition, the organizers created a larger test set 
that included the $512$ test examples from the first-stage as a subset. Moreover, these same $512$ examples
were the basis of both the oracle results and the leaderboard rankings  up until the conclusion of the second-stage competition.
Hence, for a brief period of about two weeks, we were able to maintain the illusion
of a top-ranked Kaggle competitor achieving a perfect score. To be clear: the \emph{final}, \emph{definitive} results of the competition
(announced on June 21, 2017) -- including who won the \$100,000 prize money --  were based on the log-loss on the \emph{entire} test set, not just
the subset. Naturally, our ranking declined precipitously at this point (to 225th place out of 848 contenders) since our guesses
on the remaining $75\%$ examples were just $1/c$.

\begin{figure}
\begin{center}
\includegraphics[width=3.3in]{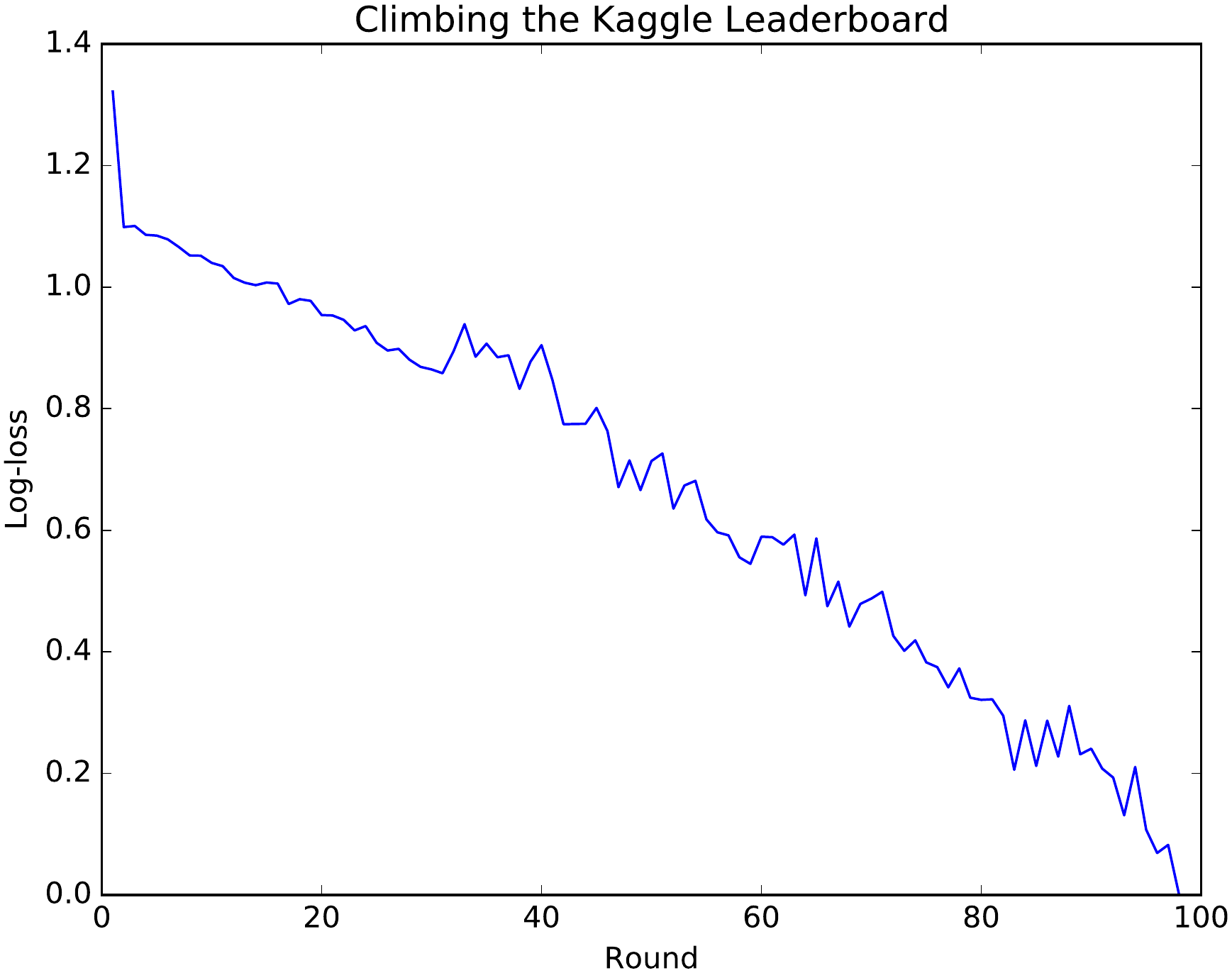}
\raisebox{+.15\height}{\includegraphics[width=3in]{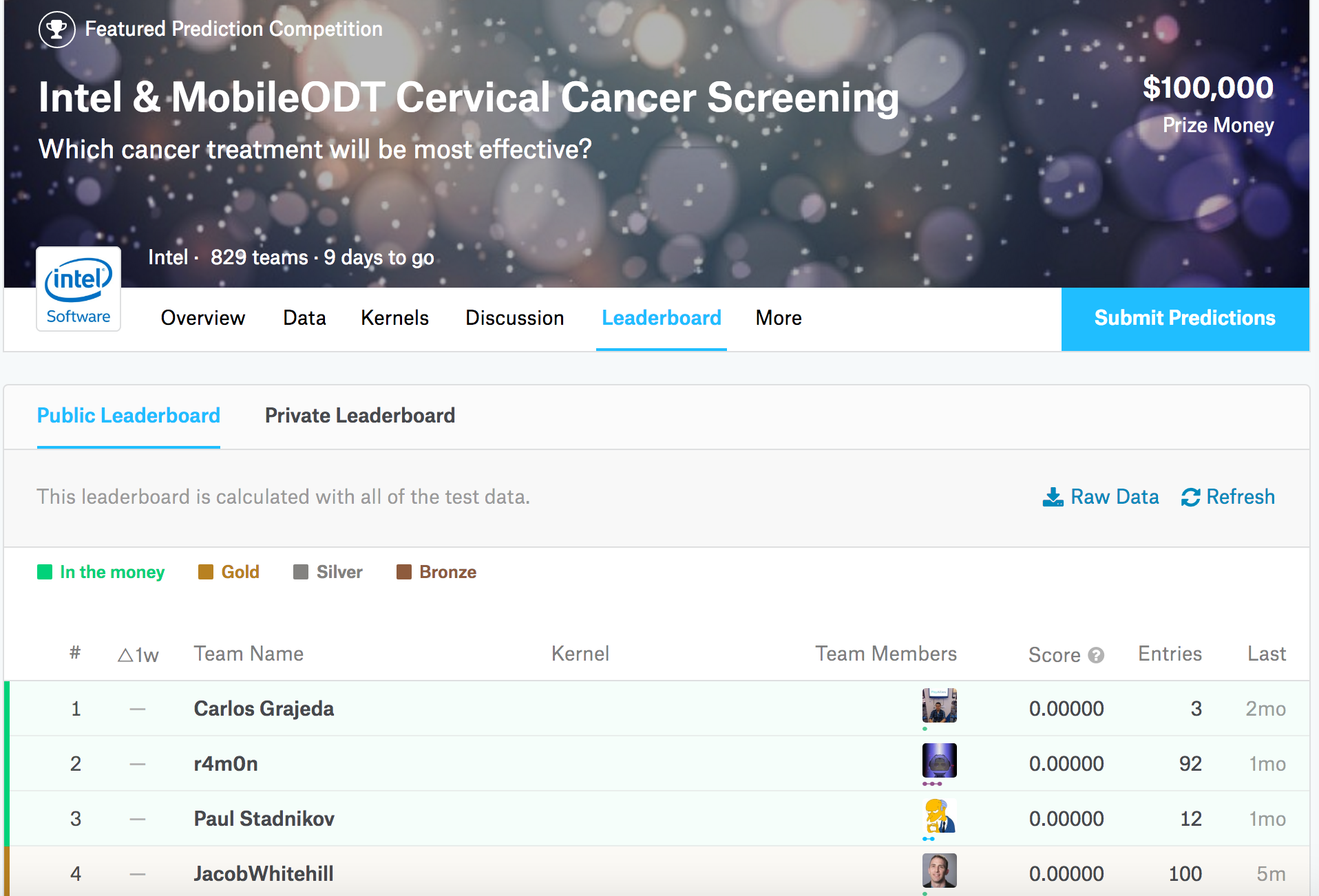}}
\caption{Climbing the Kaggle leaderboard, using the proposed log-loss oracle exploitation algorithm,
of the Intel \& MobileODT Cervical Cancer Screening 2017 competition (first stage). During the first $32$ oracle
queries, the labels of $m=4$ test examples  were ascertained during each round. Afterwards, a more aggressive
($m=6$) approach was used to descend the log-loss curve more quickly. After 98 rounds of Algorithm \ref{alg:cheat}, we were able to infer
all $512$ test labels correctly, achieved a loss of $0.00000$, and climbed the leaderboard to rank \#4 out of 848
(for the first stage of the competition).}
\label{fig:climbing}
\end{center}
\end{figure}

\section{Cheating when the Oracle Reports Accuracy on a Subset of Examples}
\label{sec:cheat_over_subset}
It is more common in data-mining competitions for the oracle to report accuracy only on
a \emph{subset} of the test set. Here we describe how a contestant can still cheat, using similar methods
as described above, when the oracle reports accuracy on a \emph{fixed subset} of examples (i.e., the
same subset for each oracle query). In this setting, the contestant submits a matrix $\widehat{\bf Y}_n$  with $n$ rows,
but the log-loss obtained from the oracle is based on only $s\leq n$ examples. Note that, in contrast to Algorithm \ref{alg:cheat},
here we treat ``already inferred'' examples in the same way as the ``uninferred'' examples -- we assign their guesses
$\widehat{y}_{ij}=1/c$ for all $i,j$ (instead of setting them to their inferred values). The only drawback of this simplification
is that the attacker cannot simultaneously infer the ground-truth \emph{and} decrease her/his log-loss 
(in the manner illustrated by Figure \ref{fig:climbing}) -- rather, the contestant must wait until after she/he has
inferred the ground-truth to ``cash in'' and jump to a higher leaderboard rank. We describe the new attack below:

{\bf Determining the size of the subset $s$}:
The first step of the attack is to determine the value of $s$. % -- this is the denominator
%in the first term on the RHS of Equation \ref{eqn:log_loss}.
To this end, it is useful to identify a \emph{single} test example
that is definitely in the $s$-element subset on which the oracle
reports accuracy. Finding such an example can be achieved
by setting the guesses $\widehat{y}_{ij}$ to $1/c$ for all but one example and setting
the guesses to random values (but not equal to $1/c$) for a single ``probe'' example $i$.
If the loss reported by the oracle is \emph{not} equal to $- \log c$, then the probe example must be one of the $s$
evaluated examples; otherwise,
another example is chosen and the procedure is repeated.
Assuming that the fraction $s/n$ is not too small, then this procedure should only 
take a few oracle queries.

Given a single example at index $i$ that is known to be among the $s$ evaluated examples,
along with the log-loss value $\ell_s$, the contestant can  determine $s$. To see how, notice that 
the $s-1$ examples that are \emph{not} example $i$ contribute a log-loss of $\frac{s-1}{s} \log c$, and that 
example $i$ contributes $- \frac{1}{s} y_{ij} \log \widehat{y}_{ij}$.
Therefore, the contestant can iterate (jointly) over all 
$n$ possible values for $s \in \{1,\ldots, n\}$ \emph{and} all $c$ possible values of ${\bf y}_i$ to find
\begin{equation}
\label{eqn:infer_s}
\argmin_s \left\{ \min_{{\bf y}_i} \left|\frac{1}{s}\left( (s-1)\times \log c - \sum_j y_{ij} \log \widehat{y}_{ij}\right) - \ell_s\right| \right\}
\end{equation}
The solution is the number of examples $s$ on which the oracle reports the log-loss. We note that, in practice,
due to finite resolution of the oracle's response and the contestant's guesses, the inferred value of $s$ can sometimes
be inaccurate. Nevertheless, even an imperfect estimate of $s$ can often be used to infer the ground-truth of the
$s$ evaluated test examples with high accuracy (see simulation results below).

{\bf Inferring the labels of a batch of probe examples}:
Now that $s$ has been inferred, the contestant can probe the labels of $m$ examples at a time.  To infer ${\bf Y}_m$, the contestant
must consider whether each probed example $i$ is in the $s$-element subset on which the oracle reports the log-loss. To this end, we
define ${\bf z} \in \{0,1\}^m$ so that $z_i$ is $1$ if example $i$ is in the $s$-element subset and $0$ otherwise. The $L_1$-norm
$\|{\bf z}\|_1$ of this vector thus equals the number of probed examples that are also in the $s$-element subset, and we can compute
the contribution of the probed examples to the log-loss as $-\frac{1}{s} \sum_i \sum_j z_i y_{ij} \log \widehat{y}_{ij}$.
The remaining log-loss is accounted for by the $(s-\|{\bf z}\|_1)$ ``uninferred'' examples and amounts to $\frac{s-\|{\bf z}\|_1}{s} \log c$. Therefore,
to determine ${\bf Y}_m$ and ${\bf z}$, the contestant must optimize
\begin{equation}
\label{eqn:cheat_over_subset}
\argmin_{{\bf Y}_m} \left\{ \min_{{\bf z}\in \{0,1\}^m} \left| \left( -\frac{1}{s} \sum_i \sum_j z_i y_{ij} \log \widehat{y}_{ij} +
                                         \frac{s-\|{\bf z}\|_1}{s} \log c\right) - \ell_n \right| \right\}
\end{equation}
using brute-force search (which is easy since $m$ is small). Naturally, row $i$ of the inferred matrix ${\bf Y}_m$ is valid only if $z_i=1$.

{\bf Choosing the guesses}:  Similar to Section \ref{sec:collisions}, we need to optimize the probe matrix
so as to to minimize collisions.
In this setting, however, we must be concerned not just with different possible ground-truth labelings ${\bf Y}_m$, but also
with the indicator variables ${\bf z}$ -- both of which ``select'' different elements $\widehat{y}_{ij}$ to add to the log-loss.
We thus revise the quality function to maximize the minimum distance, over all distinct pairs ${\bf Y}_m \ne {\bf Y}_m'$ 
\emph{and} all all distinct pairs ${\bf z} \ne {\bf z}'$, of the corresponding log-loss values (see Appendix).

\subsection{Simulation}
Instead of applying the algorithm above to the Intel-MobileODT competition\footnote{
The oracle in the second stage of the Intel-MobileODT 2017 competition evaluated only a subset
of the test examples, but it turned out that this subset was exactly the $512$ images from the first-stage test set;
hence, there was nothing new to infer.}, we conducted a simulation. In particular,
we simulated a test set  containing $n=2048$ examples (from $c=3$ classes)
where $s=512$ evaluated examples was randomly sampled (but fixed
over all oracle queries) across the entire test set. The simulated contestant first probed just single examples until it
could infer $s$ (using Equation \ref{eqn:infer_s}). The contestant then proceeded
to submit batches of $m=4$ probed examples (for $\lceil n/m \rceil$ total rounds)
and infer their labels based on the oracle's response by optimizing
Equation \ref{eqn:cheat_over_subset} using exhaustive search.

To assess how the floating-point resolution $p$ of the oracle's response impacts the accuracy of the labels inferred
by the contestant, we varied $p\in \{1,2,3,4,5\}$, where $p$ was the number of digits after the decimal point.
($p=1$ means that the oracle's responses were rounded to the nearest $0.1$; $p=2$ to the nearest $0.01$, etc.)
For each $p$ value, we conducted 100 simulations. As the probe matrix for all simulations, we used $\widetilde{G}_4$ (see appendix).
At the end of each simulation, we computed the accuracy of the inferred labels versus ground-truth,
and then averaged the accuracy rate over all 100 simulations.

{\bf Results}:
Accuracy of inferred labels increased with higher floating-point resolution, as expected.
For $p=1$, the mean accuracy was $38.8\%$, which was still slightly higher than
the expected baseline guess accuracy ($\approx 33.33\%$).
For $p=2$, mean accuracy was $47.8\%$; for  $p=3$, $59.4\%$; for $p=4$, $78.7\%$;
and for $p=5$, $93.6\%$. In many simulations, the contestant's best inference of $s$ -- the exact number of evaluated
examples in the test set that is needed in Equation \ref{eqn:cheat_over_subset} -- was incorrect,
and yet many (and often most) of the inferred test labels were still correct.
In fact, the average correlation (over all values of $p$) between the mean absolute error between the inferred $s$ and its
true value, and the accuracy of the inferred test labels w.r.t.~ ground-truth, was only $-.107$ -- suggesting that 
correct inference of the test labels is relatively robust to errors in inference of $s$.

\section{Conclusion}
We derived an algorithm whereby a contestant can illicitly improve her/his leaderboard score in a data-mining
competition by exploiting information provided by an oracle that reports the log-loss of the contestant's guesses
w.r.t.~the ground-truth labels of the test set. We also showed that the number $m$ of test examples
whose labels can be inferred in each round of the attack is fundamentally
limited by the floating-point resolution of the contestant's guesses. Nevertheless, the attack is practical, and
we demonstrated it on a recent
Kaggle competition and thereby attained a leaderboard ranking of \#4 out of 848 (for the first stage of the contest), without
ever even downloading the training or testing data.
For more general scenarios in which the oracle reports the log-loss on only a fixed subset of test examples,
we derived a second algorithm and demonstrated it in simulation. In terms of {\bf practical implications},
our findings suggest that data-mining competitions
should evaluate contestants based only on test examples that the loss/accuracy oracle never examined.

\section*{Appendix: Probe Matrices ${\bf G}_m$}
For the experiment in Section \ref{sec:experiment_kaggle}, we used 
the matrices ${\bf G}_6$ and ${\bf G}_4$ shown below (note that $\textrm{e}$ here
means ``times 10 to the power\ldots''):
\begin{eqnarray*}
{\bf G}_4 &=& \left[ \begin{array}{ccc}
$3.17090802e-01$ & $6.03843391e-01$ & $7.90658068e-02$ \\
$3.34653412e-01$ & $6.64893789e-01$ & $4.52799011e-04$ \\
$4.44242183e-01$ & $5.42742523e-01$ & $1.30152938e-02$ \\
$3.02254057e-01$ & $1.41415552e-01$ & $5.56330391e-01$ \\
                   \end{array}\right] \qquad \textrm{and}\quad Q({\bf G}_4)=0.019248
\\
{\bf G}_6 &=& \left[ \begin{array}{ccc}
$3.72716316e-13$ & $3.17270110e-06$ & $9.99996841e-01$ \\
$4.03777185e-11$ & $2.98306441e-06$ & $9.99997020e-01$ \\
$1.51235222e-11$ & $9.45069790e-02$ & $9.05493021e-01$ \\
$7.54659835e-10$ & $6.77224932e-07$ & $9.99999344e-01$ \\
$1.84318694e-09$ & $2.37398371e-01$ & $7.62601614e-01$ \\
$9.75336131e-12$ & $1.44393380e-06$ & $9.99998569e-01$ \\
                   \end{array}\right] \qquad \textrm{and}\quad Q({\bf G}_6)=0.001526
\end{eqnarray*}
For the simulation in Section \ref{sec:cheat_over_subset}, we defined a new quality function
\[
\widetilde{Q}(\widetilde{\bf G}_m) \doteq 
\min_{\substack{{\bf z}_m,\\{\bf Y}_m \ne {\bf Y}_m'}}
  |\widetilde{f}({\bf z}_m, {\bf Y}_m,{\bf G}_m) - \widetilde{f}({\bf z}_m, {\bf Y}_m',{\bf G}_m)|
\]
where
\[
\widetilde{f}({\bf z}_m, {\bf Y}_m,\widehat{\bf Y}_m) = -\frac{1}{m} \sum_i \sum_j z_i y_{ij} \log \widehat{y}_{ij}
\]

In our simulated attack against the oracle, we used:
\begin{eqnarray*}
\widetilde{\bf G}_4 &=& \left[ \begin{array}{ccc}
$3.34296189e-02$ & $6.06806998e-06$ & $9.66564298e-01$ \\
$6.80901580e-15$ & $8.52564275e-02$ & $9.14743602e-01$ \\
$1.78242549e-01$ & $2.03901175e-12$ & $8.21757436e-01$ \\
$1.22676250e-02$ & $1.40922994e-03$ & $9.86323118e-01$ \\
                   \end{array}\right] \qquad \textrm{and}\quad \widetilde{Q}(\widetilde{{\bf G}}_4)= 0.012750
\end{eqnarray*}

\bibliographystyle{ieee}
\bibliography{paper}
\end{document}